\newtheorem{theorem}{Theorem}
\newtheorem{lemma}[theorem]{Lemma}
\crefname{claim}{Claim}{Claims}
\theoremstyle{definition}
\newtheorem{definition}[theorem]{Definition}
\crefname{assumption}{Assumption}{Assumptions}
\crefname{customassumption}{Assumption}{Assumptions}
\theoremstyle{remark}
\newenvironment{keywords}%
{\begin{abstract}\noindent}%
{\end{abstract}}
\DeclareMathOperator*{\argmax}{arg\,max} 
\def\M{\mathcal{M}} 
\def\A{\mathcal{A}} 
\def\E{\mathcal{E}} 
\def\E{\mathcal{E}} 
\def\E{\mathcal{E}} 
\def\H{(\A \times \E)^*} 
\let\aechar\ae 
\renewcommand{\ae}{
\ifmmode\mathchoice{
	\mbox{\textsl{\aechar}}
}{
	\mbox{\textsl{\aechar}}
}{
	\mbox{\scriptsize\textsl{\aechar}}
}{
	\mbox{\scriptsize\textsl{\aechar}}
}\else\aechar\fi%
} 
\def\paradot#1{\paragraph{#1.}}
\def\epstr{\epsilon}            
\def\SetR{\mathbb{R}}           
\newcommand*{\AImu}{AI$\mu$}
\newcommand*{\As}{\A^{{\rm suicide}}}
\newcommand*{\mun}{\mu_{{\rm norm}}}
\newcommand*{\nun}{\nu_{{\rm norm}}}
\title{Death and Suicide in Universal Artificial Intelligence%
\footnote{A shorter version of this paper will be presented at
AGI-16 \cite{Martin2016agi}.}}
\date{\today}
\author{Jarryd Martin \and Tom Everitt \and Marcus Hutter
\\[0.5em]Australian National University, Canberra, Australia}
\begin{document}
\maketitle

\begin{abstract}
  Reinforcement learning (RL) is a general paradigm for studying
  intelligent behaviour, with applications ranging from artificial
  intelligence to psychology and economics. AIXI is a universal
  solution to the RL problem; it can learn any computable environment.
  A technical subtlety of AIXI is that it is defined using a mixture over
  {\em semimeasures} that need not sum to 1, rather than
  over proper probability measures. In this work we argue that
  the shortfall of a semimeasure can naturally be interpreted as
  the agent's estimate of the probability of its death. We formally define
  death for generally intelligent agents like AIXI, and prove a number
  of related theorems about their behaviour. Notable discoveries
  include that agent behaviour can change radically under positive linear
  transformations of the reward signal (from suicidal to
  dogmatically self-preserving), and that the agent's posterior belief
  that it will survive increases over time.
\end{abstract}

\begin{keywords}
  AIXI,
  universal intelligence,
  algorithmic information theory,
  semimeasure,
  Solomonoff Induction,
  AI safety,
  death,
  suicide,
  suicidal agent
\end{keywords}

\tableofcontents
\pagebreak

\begin{quote}\it
\lq`That Suicide may often be consistent with interest and with our duty to ourselves, no one can question, who allows, that age, sickness, or misfortune may render life a burthen, and make it worse even than annihilation." \par
\hfill --- {\sl Hume, \textit{Of Suicide} (1777)}
\end{quote}

\section{Introduction}

Reinforcement Learning (RL) has proven to be a fruitful theoretical framework for reasoning about the properties of generally intelligent agents \cite{Hutter:04uaibook}. A good theoretical understanding of these agents is valuable for several reasons. Firstly, it can guide principled attempts to construct such agents \cite{Veness:09}. Secondly, once such agents are constructed, it may serve to make their reasoning and behaviour more transparent and intelligible to humans.  Thirdly, it may assist in the development of strategies for controlling these agents. The latter challenge has recently received considerable attention in the context of the potential risks posed by these agents  to human safety  \cite{Bostrom2014}. It has even been argued that control strategies should be devised \textit{before} generally intelligent agents are first built \cite{Soares2015}. In this context - where we must reason about the behaviour of agents in the absence of a full specification of their implementation - a theoretical understanding of their general properties seems indispensable.

The universally intelligent agent AIXI constitutes a formal mathematical theory of artificial general intelligence \cite{Hutter:04uaibook}. AIXI models its environment using a \emph{universal mixture $\xi$} over the class of all lower semi-computable semimeasures, and thus is able to learn any computable environment. Semimeasures are defective probability measures which may sum to less than 1. Originally devised for Solomonoff induction, they are necessary for universal artificial intelligence because the halting problem prevents the existence of a (lower semi-)computable universal measure for the class of (computable) measures \cite{Li:08}. Recent work has shown that their use in RL has technical consequences that do not arise with proper measures.%
\footnote{For example, Leike and Hutter \cite{Leike2015a} proved that since $\xi$ is a mixture over semimeasures, the iterative and recursive formulations of the value function are non-equivalent.}
However, their use has heretofore lacked an interpretation proper to the RL context. In this paper, we argue that the measure loss suffered by semimeasures admits a deep and fruitful interpretation in terms of the agent's \emph{death}. We intend this usage to be intuitive: death means that one sees no more percepts, and takes no more actions. Assigning positive probability to death at time $t$ thus means assigning probability less than 1 to seeing a percept at time $t$. This motivates us to interpret the semimeasure loss in AIXI's environment model as its estimate of the probability of its own death.

\paradot{Contributions}
We first compare the interpretation of semimeasure loss as death-probability with an alternative characterisation of death as a \lq death-state' with 0 reward, and prove that the two definitions are equivalent for value-maximising agents (\cref{th:deathsame}).
Using this formalism we proceed to reason about the behaviour of several generally intelligent agents in relation to death: AI$\mu$, which knows the true environment distribution; AI$\xi$, which models the environment using a universal mixture; and AIXI, a special case of AI$\xi$ that uses the Solomonoff prior \cite{Hutter:04uaibook}.
Under various conditions, we show that:
\begin{itemize}
\item[•] Standard AI$\mu$ will try to avoid death (\cref{th:self-preserve}).
\item[•] AI$\mu$ with reward range shifted to $[-1,0]$ will seek death (\cref{th:suicide});
   which we may interpret as AI$\mu$ attempting suicide.
  This change is very unusual, given that agent behaviour is normally
  invariant under positive linear transformations of the reward. We briefly consider the relevance of these results to AI safety risks and control strategies.
\item[•] AIXI increasingly believes it is in a safe environment (\cref{th:ratio}), and asymptotically its posterior estimate of the death-probability on sequence goes to 0 (\cref{th:immortal}). This occurs regardless of the true death-probability.
 \item[•] However, we show by example that AIXI may maintain high
  probability of death \emph{off-sequence} in certain situations.
  Put simply, AIXI learns that it will live forever, but
  not necessarily that it is immortal.
\end{itemize}

\section{Preliminaries}
\paradot{Strings}
Let the \textit{alphabet} $\mathcal{X}$ be a finite set of symbols,
$\mathcal{X}^* := \bigcup^{\infty}_{n=0}\mathcal{X}^n$ be the set
of all finite strings over alphabet $\mathcal{X}$, and
$\mathcal{X}^\infty$ be the set of all infinite strings over
alphabet $\mathcal{X}$. Their union is the set $\mathcal{X}^{\#}:=
\mathcal{X}^*\cup\mathcal{X}^\infty$. We denote the empty string by
$\epstr$. For a string $x\in\mathcal{X}^*$, $x_{1:k}$ denotes the first $k$ characters of $x$, and $x_{<k}$ denotes the first $k-1$ characters of $x$. An
infinite string is denoted $x_{1:\infty}$.

\paradot{Semimeasures}
In Algorithmic Information Theory, a \emph{semimeasure} over an alphabet $\mathcal{X}$ is a function $\nu: \mathcal{X}^*\to [0,1]$ such that $(1) \ \nu(\epstr) \leq 1 $, and $(2) \ \nu(x) \geq \sum_{y\in\mathcal{X}} \nu(xy), \ \forall x\in\mathcal{X}^*$. We tend to use the equivalent conditional formulation of (2): $1 \geq \sum_{y\in\mathcal{X}} \nu(y\mid x)$. $\nu(x)$ is the probability that a string starts with $x$. $\nu(y\mid x) = \frac{\nu(xy)}{\nu(x)}$ is the probability that a string $y$ follows $x$. Any semimeasure $\nu$ can be turned into a measure $\nun$ using Solomonoff normalisation \cite{Solomonoff:78}. Simply let $\nun(\epstr) := 1$ and $\forall x\in\mathcal{X^*},\ y\in\mathcal{X}$:
\begin{equation}\label{munorm}
  \nun(xy) := \nun(x)\frac{\nu(xy)}{\sum_{z\in\mathcal{X}}{\nu(xz)}}, ~~\mbox{hence}~~
  {\nu(y\mid x)\over\nun(y\mid x)} = \sum_{z\in\mathcal{X}}\nu(z\mid x)
\end{equation}

\paradot{General reinforcement learning}
In the general RL framework, the agent interacts with an environment in cycles: at each time step $t$ the agent selects an \emph{action} $a_t\in\A$, and receives a \emph{percept} $e_t\in\E$. Each percept $e_t = (o_t,r_t)$ is a tuple consisting of an \emph{observation} $o_t\in \mathcal{O}$ and a reward $r_t\in\SetR$. The cycle then repeats for $t+1$, and so on. A \emph{history} is an alternating sequence of actions and percepts (an element of $(\A\times\E)^*\cup(\A\times\E)^*\times\A$). We use $\ae$ to denote one agent-environment interaction cycle, $\ae_{1:t}$ to denote a history of length $t$ cycles. $\ae_{<t}a_t$ denotes a history where the agent has taken an action $a_t$, but the environment has not yet returned a percept $e_t$.

Formally, the \emph{agent} is a policy $\pi:(\A\times\E)^*\to \A$, that maps histories to actions. An \emph{environment} takes a sequence of actions $a_{1:\infty}$ as input and returns a \emph{chronological semimeasure} $\nu(\cdot)$ over the set of percept sequences $\mathcal{E}^\infty$.\footnote{For simplicity we hereafter simply refer to the environment \emph{itself} as $\nu$.} A semimeasure $\nu$ is chronological if $e_t$ does not depend on future actions (so we write $\nu(e_t\mid \ae_{<t}a_{t:\infty})$ as $\nu(e_t\mid \ae_{<t})$).\footnote{Note that $\nu$ is not a distribution over actions, so the presence of actions in the condition of $\nu(e_t\mid \ae_{<t})$ is an abuse of notation we adopt for simplicity.} The \emph{true environment} is denoted $\mu$.

\paradot{The value function}
We define the \emph{value} (expected total future reward) of a policy $\pi$ in an environment $\nu$ given a history $\ae_{<t}$ \cite{Leike2015a}:
\begin{align*}
V^\pi_\nu(\ae_{<t}a_t) &= \frac{1}{\Gamma_t}\sum_{e_t}\bigg(\gamma_t r_t + \Gamma_{t+1} V^\pi_\nu (\ae_{1:t}) \bigg) \nu(e_t\mid \ae_{<t}a_t)\\
                &= \frac{1}{\Gamma_t}\sum_{k=t}^\infty\sum_{e_{t:k}}{\gamma_kr_k}\nu(e_{t:k}\mid \ae_{<t}a_{t:k})\\
V^\pi_\nu(\ae_{<t}) &= V^\pi_\nu(\ae_{<t}a^\pi_t)
\end{align*}
where $\gamma_t$ is the instantaneous discount, the summed discount is $\Gamma_t=\sum_{k=1}^t\gamma_k$, and $a_t^\pi=\pi(\ae_{<t})$.\\
\paradot{Three agent models: AI$\mu$, AI$\xi$, AIXI}
For the true environment $\mu$, the agent \emph{AI$\mu$} is
defined as a $\mu$-optimal policy
\[
  \pi^\mu(\ae_{<t}) := \argmax_\pi V^\pi_\mu(\ae_{<t}).
\]
AI$\mu$ does not \emph{learn} that the true environment is $\mu$,
it knows $\mu$ from the beginning and simply maximises $\mu$-expected value.

On the other hand, the agent \emph{AI$\xi$} does not know the true environment
distribution. Instead, it maximises value with
respect to a mixture distribution $\xi$ over a countable class of
environments $\M$:
\[
  \xi(e_t\mid \ae_{<t}a_t) = \sum_{\nu\in\M}w_\nu(\ae_{<t})\nu(e_t\mid \ae_{<t}a_t), \qquad
w_\nu(\ae_{<t}) ~:=~ w_\nu\frac{\nu(e_{<t}\mid a_{<t})}{\xi(e_{<t}\mid a_{<t})}
\]
where $w_\nu$ is the prior belief in $\nu$, with $\sum_\nu w_\nu \leq 1$ and
$w_\nu>0, \ \forall \nu\in\M$ (hence $\xi$ is universal for $\M$), and $w_\nu(\ae_{<t})$ is the posterior given $\ae_{<t}$. AI$\xi$ is the policy:
\[
  \pi^\xi(\ae_{<t}) := \argmax_{\pi}V^{\pi}_{\xi}(\ae_{<t}).
\]
If we stipulate that $\xi$ be a mixture over the class of
all lower-semicomputable semimeasures $\nu$, and set $w_\nu =
2^{-K(\nu)}$, where $K(\cdot)$ is the Kolmogorov Complexity, we get the
agent \emph{AIXI}.

\section{Definitions of Death}
\paradot{Death as semimeasure loss}
We now turn to our first candidate definition of agent death, which we hereafter term \lq semimeasure-death'. This definition equates the probability (induced by a semimeasure $\nu$) of death at time $t$  with the measure loss of $\nu$ at time $t$. We first define the instantaneous  
measure loss.

\begin{definition}[Instantaneous measure loss]
The \emph{instantaneous measure loss} of a semimeasure $\nu$ at time $t$ given a history $\ae_{<t}a_t$ is:
\[
  L_\nu(\ae_{<t}a_t) = 1 - \sum_{e_t}{\nu(e_t\mid \ae_{<t}a_t)}
\]
\end{definition}

\begin{definition}[Semimeasure-death]
An agent \emph{dies at time $t$} in an environment $\mu$ if, given a history $\ae_{<t}a_t$, $\mu$ does not produce a percept $e_t$.
The \emph{$\mu$-probability of death} at $t$ given a history $\ae_{<t}a_t$ is equal to $L_\mu(\ae_{<t}a_t)$, the instantaneous $\mu$-measure loss at $t$.
\end{definition}

The instantaneous $\mu$-measure loss $L_\mu(\ae_{<t}a_t)$ represents the probability that no percept $e_t$ is produced by $\mu$. Without $e_t$, the agent cannot take any further actions, because the agent is just a policy $\pi$ that maps histories $\ae_{<t}$ to actions $a_{t}$. That is, $\pi$ is a function that only takes as inputs those histories that have a percept $e_t$ as their most recent element. Hence if $e_t$ is not returned by $\mu$, the agent-interaction cycle must halt. It seems natural to call this a kind of death for the agent.

It is worth emphasising this definition's generality as a model of death in the agent context. \textit{Any sequence of death-probabilities} can be captured by some semimeasure $\mu$ that has this sequence of instantaneous measure losses $L_\mu(\ae_{<t})$ given a history $\ae_{<t}$ (in fact there are always infinitely many such $\mu$). This definition is therefore a general and rigorous way of treating death in the RL framework.

\paradot{Death as a death-state}
We now come to our second candidate definition: death as entry into an absorbing \emph{death-state}. A trap, so to speak, from which the agent can never return to any other state, and in which it receives the same percept at all future timesteps. Since in the general RL framework we deal with histories rather than states, we must formally define this death-state in an indirect way. We define it in terms of a \emph{death-percept $e^d$}, and by placing certain conditions on the environment semimeasure $\mu$.
\begin{definition} [Death-state]
Given a true environment $\mu$ and a history $\ae_{<t}a_t$,
we say that the agent is in a \emph{death-state at time $t$}
if for all $t'\geq t$ and all $a_{(t+1):t'}\in\A^*$,
\[
\mu(e^d_{t'}\mid \ae_{<t}\ae^d_{t:t'-1}a_{t'}) = 1.
\]
An agent \emph{dies at time $t$} if the agent is not in the
death-state at $t-1$ and is in the death-state at $t$.
\end{definition}
According to this definition, upon the agent's death the environment
repeatedly produces an observation-reward pair $e^d \equiv o^dr^d$.
The choice of $o^d$ is inconsequential because the agent's remains in
the death-state no matter what it observes or does. The choice of
$r^d$ is not inconsequential, however, as it determines the agent's
estimate of the value of dying, and thus affects the agent's behaviour. This
issue will be discussed in Section \ref{rewardrange}.  

One problem with this definition is that an agent in an environment
$\mu$ with a death-state may also have non-zero probability of
semimeasure-death (i.e. $L_\mu(\ae_{<t}a_t) > 0$, given some history
$\ae_{<t}a_t$).%
\footnote{We could restrict the class of
  environments to lower-semicomputable \emph{measures}, but we will
  see that this is unnecessary as the problem is only apparent.}
This definition therefore seems to allow for two different kinds of
agent death. In the following section we resolve this apparent problem
by showing that semimeasure-death is formally equivalent to a
death-state given certain assumptions.

\paradot{Unifying the death-state with semimeasure-death}
Interestingly, from the perspective of a value maximising agent like
AIXI, semimeasure-death at $t$ is equivalent to entrance at $t$ into a
death-state with reward $r^d=0$. To prove this claim we
first define, for each environment semimeasure $\mu$, a corresponding
environment $\mu'$ that has a death-state.
\begin{definition}[Equivalent death-state environment $\mu'$]
\label{def:equiv}
For any environment $\mu$, we can construct its \emph{equivalent death-state environment $\mu'$}, where:
\begin{itemize}
\item[•]  $\mu'$ is defined over an augmented percept set $\E_d = \{\E \cup \{e^d\}\}$ that includes the death-percept $e^d$.\footnote{For technical reasons we require that $e^d \notin \E$.}
\item[•] The death-reward $r^d = 0$.
\item[•]  The $\mu'$-probability of all percepts except the death-percept is equal to the $\mu$-probability: $\mu'(e_t\mid \ae_{<t}a_t) = \mu(e_t\mid \ae_{<t}a_t), \ \forall e_{1:t}\in\E^t$.
\item[•]  The $\mu'$-probability of the death-percept is equal to the $\mu$-measure loss: $\mu'(e^d\mid \ae_{<t}a_t) = L_\mu(\ae_{<t}a_t)$.
\item[•]  If the agent has seen the death-percept before, the $\mu'$-probability of seeing it at all future timesteps is 1: $\mu'(e^d\mid \ae_{<t}a_t) = 1$ if $\exists t'<t$ s.t. $e_{t'}=e^d$.\end{itemize}
\end{definition}

Note that $\mu'$ is a proper measure, because on any history sequence:
$$\sum_{e_t\in\E_d}\mu'(e_t\mid \ae_{<t}a_t) = \sum_{e_t\in\E}\mu(e_t\mid \ae_{<t}a_t) + L_\mu(\ae_{<t}a_t) = 1.$$
Hence there is zero probability of semimeasure-death in $\mu'$. Moreover, the probability of entering the death-state in $\mu'$ is equal to the probability of semimeasure-death in $\mu$. We now prove that $\mu$ and $\mu'$ are equivalent in the sense that a value-maximising agent will behave the same way in both environments.
\begin{theorem}[Equivalence of semimeasure-death and death-state]\label{th:deathsame}
Given a history $\ae_{<t}\in\H$ the value $V_\mu^\pi(\ae_{<t})$ of an arbitrary policy\footnote{To compare an agent's behaviour in $\mu$ with that in $\mu'$, we should also augment its policy $\pi$ so that it is defined over $(\A\times\E_d)^*$. However, because actions taken in the death-state are inconsequential, this modification is purely technical and for simplicity we still refer to the augmented policy as $\pi$.} $\pi$ in an environment $\mu$ is equal to its value $V_{\mu'}^{\pi}(\ae_{<t})$ in the equivalent death-state environment $\mu'$.
\end{theorem}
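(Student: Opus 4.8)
The plan is to work directly with the explicit (non-recursive) form of the value function,
\[
V^\pi_\nu(\ae_{<t}a_t) = \frac{1}{\Gamma_t}\sum_{k=t}^\infty\sum_{e_{t:k}}{\gamma_k r_k}\,\nu(e_{t:k}\mid \ae_{<t}a_{t:k}),
\]
and to compare the two environments term by term in $k$. Since the normalisation $1/\Gamma_t$ and the discount weights $\gamma_k$ do not depend on the environment, it suffices to show that for every $k\ge t$ the quantity $\sum_{e_{t:k}} r_k\,\nu(e_{t:k}\mid\ae_{<t}a_{t:k})$ agrees for $\nu=\mu$ and $\nu=\mu'$. The only difference between the two inner sums is that for $\mu'$ the summation ranges over $e_{t:k}\in\E_d^{k-t+1}$, i.e.\ it additionally includes all sequences containing at least one death-percept $e^d$.

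First I would handle the surviving trajectories. For any sequence $e_{t:k}\in\E^{k-t+1}$ of genuine percepts, every conditioning history $\ae_{<j}$ (for $t\le j\le k$) consists solely of elements of $\E$, so the third bullet of \cref{def:equiv} gives $\mu'(e_j\mid\ae_{<j}a_j)=\mu(e_j\mid\ae_{<j}a_j)$ at each step. Applying the chain rule for conditional (semi)measures then yields $\mu'(e_{t:k}\mid\ae_{<t}a_{t:k})=\mu(e_{t:k}\mid\ae_{<t}a_{t:k})$, so these terms reproduce exactly the corresponding terms of $V^\pi_\mu$.

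The key step is to show that every trajectory containing a death-percept contributes nothing. By the absorbing property (final bullet of \cref{def:equiv}), once $e^d$ has occurred the conditional $\mu'$-probability of $e^d$ equals $1$ at all later times; hence any sequence $e_{t:k}$ that contains a death-percept but does not end in $e^d$ has $\mu'$-probability zero and may be discarded. For the remaining death-containing sequences we have $e_k=e^d$, so $r_k=r^d=0$ and the summand $r_k\,\mu'(e_{t:k}\mid\ae_{<t}a_{t:k})$ vanishes. Thus the extra terms present in the $\mu'$-sum over $\E_d^{k-t+1}\setminus\E^{k-t+1}$ all drop out, and the two inner sums coincide for every $k$. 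Summing over $k$ and reinstating $1/\Gamma_t$ gives $V^\pi_{\mu'}(\ae_{<t}a_t)=V^\pi_\mu(\ae_{<t}a_t)$, and taking $a_t=\pi(\ae_{<t})$ (the same action in both environments, since the given $\ae_{<t}$ contains only genuine percepts) yields the claimed equality $V^\pi_{\mu'}(\ae_{<t})=V^\pi_\mu(\ae_{<t})$.

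I expect the main obstacle to be bookkeeping rather than conceptual: one must check that the split of the $\E_d^{k-t+1}$-sum into surviving and death-containing trajectories is legitimate and that the double sum over $k$ and $e_{t:k}$ converges absolutely (which follows from bounded rewards together with summability of the discounts $\gamma_k$), so that the term-by-term comparison is justified. The conceptual content is captured entirely by the observation that $r^d=0$ makes the post-death tail of every trajectory contribute zero reward, exactly mirroring how the lost semimeasure mass of $\mu$ simply fails to be counted in $V^\pi_\mu$.
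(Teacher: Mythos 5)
Your proof is correct and takes essentially the same approach as the paper's: expand the value function in its explicit summed form, split the $\mu'$-sum over percept sequences into surviving trajectories (where $\mu'$ agrees with $\mu$ by construction) and death-containing trajectories (which contribute nothing because $r^d=0$). Your write-up is in fact slightly more explicit than the paper's about why sequences containing $e^d$ but not ending in it can be discarded and about the rearrangement of the double sum.
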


\begin{proof}
\begin{align*}
&V_{\mu'}^{\pi}(\ae_{<t})\\
&= \frac{1}{\Gamma_t}\sum_{k=t}^\infty
  \sum_{e_{t:k}}{\gamma_kr_k}\mu'(e_{t:k}\mid \ae_{<t}a_{t:k})\\
&= \frac{1}{\Gamma_t}\sum_{k=t}^\infty
  \bigg(\sum_{e_{t:k}\in\E^*}{\gamma_kr_k}\mu'(e_{t:k}\mid \ae_{<t}a_{t:k}) \
  + \ \sum_{e_{t:k}, \ e_k = e^d}\!\!\!{\gamma_kr_k}\mu'(e_{t:k}\mid \ae_{<t}a_{t:k})\bigg)\\
&= \frac{1}{\Gamma_t}\sum_{k=t}^\infty\bigg(\sum_{e_{t:k}\in\E^*}{\gamma_kr_k}\mu(e_{t:k}\mid \ae_{<t}a_{t:k}) ~+\!\! \sum_{e_{t:k}, \ e_k = e^d}\!\!\!\gamma_k\cdot 0\cdot\mu'(e_{t:k}\mid \ae_{<t}a_{t:k})\bigg)\\
&= \frac{1}{\Gamma_t}\sum_{k=t}^\infty\sum_{e_{t:k}}{\gamma_kr_k}\mu(e_{t:k}\mid \ae_{<t}a_{t:k}) = V_\mu^\pi(\ae_{<t}). \qedhere
\end{align*}
\end{proof}

The behaviour of a value-maximising agent will therefore be the same
in both environments. This equivalence has numerous
implications. Firstly, it illustrates that
a death-reward $r^d=0$ implicitly attends
semimeasure-death. That is, an agent that models the
environment using semimeasures behaves as if the death-reward is zero,
even though that value is nowhere explicitly represented.
Secondly, 
it demonstrates that an agent does not need to encode an explicit
representation of death (let alone a representation that would be
transparent to its designers) in order to reason about death
effectively.

Thirdly, 
the equivalence of these seemingly different formalisms should give us confidence that they really do capture something general or fundamental about agent death.\footnote{If the two formalisations predicted different behaviour, or were only applicable in incomparable environment classes, we might worry that our results were more reflective of our model choice than of any general property of intelligent agents.} In the remainder of this paper we deploy these formal models to analyse the behaviour of universal agents, which are themselves models of general intelligence. We hope that this will serve as a preliminary sketch of the general behavioural characteristics of value-maximising agents in relation to death. It would be naive, however, to think that all agents should conform to this sketch. The agents considered herein are incomputable, and the behaviour of the computable agents that are actually implemented in the future may differ in ways that our analysis elides. Moreover, there is another interesting property that sets universal agents apart. We proceed to show that their use of semimeasures makes their behaviour unusually dependent on the choice of reward range.

\section{Known Environments: AI$\mu$ }\label{s}\label{rewardrange}

In this section we show that a universal agent's behaviour can depend on
the reward range.
This is a surprising result, because in a standard RL setup
in which the environment is modelled as a proper probability measure
(not a semimeasure), the relative value of two policies is invariant under
positive linear transformations of the reward
\cite{Hutter:04uaibook,Leike2015a}.

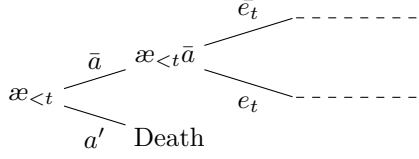
\begin{figure}
\centering
\begin{tikzpicture}[
	every node/.style = {inner sep=0cm,circle},
	grow = right,  
	level distance = 17mm,
	level 1/.style = {sibling distance=3em},
]
\node (init) {$\ae_{<t}\ $}
child {
	node (D) {\ Death}
} child {
	node (A) {$\ \ae_{<t}\bar{a}\ $}
	child {
		node (a1) {} 
                child [dashed] {
                  node (a1a) {}
                }
	}
	child {
		node (a2) {} 
                child [dashed] {
                  node (a2a) {}
                }
	}
};
\path (init) to node[below, inner sep=0.2em] {$a'$} (D);
\path (init) to node[above, inner sep=0.2em] {$\bar{a}$} (A);
\path (A) to node[above, inner sep=0.2em] {$\bar{e_t}$} (a2);
\path (A) to node[below, inner sep=0.2em] {$e_t$} (a1);
\end{tikzpicture}
\caption{In the environment $\mu$, action $a'$ leads to certain death.}
\label{fig:nu}
\end{figure}

Here we focus on the agent AI$\mu$, which knows the true environment distribution. This simplifies the analysis, and makes clear that the aforementioned change in behaviour arises purely because the agent's environment model is a semimeasure. In the following proofs we denote AI$\mu$'s policy $\pi^\mu$ by $\pi$. We also assume that given any history $\ae_{<t}$ there is always at least one action $\bar{a}\in\A$ such that $V_\mu^\pi(\ae_{<t}\bar{a}) \neq 0$. 
In situations in which this assumption is false there is no interesting decision to be made by the agent and we omit them from our analysis.
\begin{lemma}[Value of full measure loss]\label{lm:fullloss}
If the environment $\mu$ suffers
full measures loss $L_\mu(\ae_{<t}a_t) = 1$ from $\ae_{<t}a_t$,
then the value of any policy $\pi$ after $\ae_{<t}a_t$
is $V_\mu^\pi(\ae_{<t}a_t) = 0$.
\end{lemma}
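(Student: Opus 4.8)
The plan is to read the meaning of full measure loss directly off the definitions and then observe that it forces the value to vanish term by term; no machinery beyond the definition of $V^\pi_\mu$ is needed. First I would rewrite the hypothesis. By the definition of instantaneous measure loss, $L_\mu(\ae_{<t}a_t) = 1 - \sum_{e_t}\mu(e_t\mid\ae_{<t}a_t)$, so $L_\mu(\ae_{<t}a_t) = 1$ is equivalent to $\sum_{e_t}\mu(e_t\mid\ae_{<t}a_t) = 0$. Since each term $\mu(e_t\mid\ae_{<t}a_t)$ is nonnegative, this sum can vanish only if $\mu(e_t\mid\ae_{<t}a_t) = 0$ for \emph{every} percept $e_t\in\E$.

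Next I would substitute this into the recursive form of the value function from the preliminaries, $V^\pi_\mu(\ae_{<t}a_t) = \frac{1}{\Gamma_t}\sum_{e_t}\big(\gamma_t r_t + \Gamma_{t+1}V^\pi_\mu(\ae_{1:t})\big)\mu(e_t\mid\ae_{<t}a_t)$. Every summand carries the factor $\mu(e_t\mid\ae_{<t}a_t)$, which we have just shown is $0$, so each summand vanishes and the whole expression collapses to $0$. Equivalently one could use the iterative form $V^\pi_\mu(\ae_{<t}a_t) = \frac{1}{\Gamma_t}\sum_{k=t}^\infty\sum_{e_{t:k}}\gamma_k r_k\,\mu(e_{t:k}\mid\ae_{<t}a_{t:k})$ and factor the chronological joint as $\mu(e_{t:k}\mid\ae_{<t}a_{t:k}) = \mu(e_t\mid\ae_{<t}a_t)\,\mu(e_{t+1:k}\mid\ae_{1:t}a_{t+1:k})$, where the leading factor is $0$; then each joint probability, and hence each summand, is $0$.

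The only point requiring care — and the nearest thing to an obstacle — is to confirm that this is a genuine multiplication by zero and not an indeterminate $0\cdot\infty$. I would note that the rewards are bounded and the discounts are summable, so each continuation value $V^\pi_\mu(\ae_{1:t})$ is finite and the bracketed term $\gamma_t r_t + \Gamma_{t+1}V^\pi_\mu(\ae_{1:t})$ is a finite quantity; multiplying a finite quantity by the zero probability $\mu(e_t\mid\ae_{<t}a_t)$ legitimately yields $0$. Hence $V^\pi_\mu(\ae_{<t}a_t) = 0$ for every policy $\pi$, as claimed. The argument is essentially definitional and should be only a few lines.
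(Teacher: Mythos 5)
Your proof is correct and follows essentially the same route as the paper: full measure loss forces $\mu(e_t\mid\ae_{<t}a_t)=0$ for every $e_t$ by nonnegativity, and substituting into the recursive form of $V^\pi_\mu$ kills every summand. The extra remark about avoiding an indeterminate $0\cdot\infty$ (justified by bounded rewards and summable discounts) is a small refinement the paper omits but does not change the argument.
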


\begin{proof}
Let $\ae_{<t}a_t$ induce full measure loss $L_\mu(\ae_{<t}a_t) = 1$.
Then $\sum_{e_t}\mu(e_t\mid \ae_{<t}a_t) = 0$ and
$\mu(e_t\mid \ae_{<t}a_t) = 0$
since $\mu(e_t\mid \ae_{<t}a_t) \geq 0$.
Substituting this into the definition of the value function gives:
\begin{align*}
V_\mu^\pi(\ae_{<t}a_t) &=  \frac{1}{\Gamma_t}\sum_{e_t}\big(\gamma_t r_t + \Gamma_{t+1} V^\pi_\mu (\ae_{1:t}) \big) \mu(e_t\mid \ae_{<t}a_t) \\
&= \frac{1}{\Gamma_t}\sum_{e_t}(\gamma_tr_t+ \Gamma_{t+1}V^\pi_\mu(\ae_{1:t}))\!\cdot\! 0 = 0.\qedhere
\end{align*}
\end{proof}

The following two theorems show that if rewards are non-negative,
then \AImu{} will avoid actions leading to certain death
(\cref{th:self-preserve}),
and that if rewards are non-positive, then \AImu{} will seek certain
death (\cref{th:suicide}).
The situation investigated in \cref{th:self-preserve,th:suicide} is
illustrated in \cref{fig:nu}.

\begin{theorem}[Self-preserving AI$\mu$]
\label{th:self-preserve}
If rewards are bounded and non-negative, then given a history $\ae_{<t}$ AI$\mu$ avoids certain immediate death:
\[
  \exists a' \in \A \text{ s.t. } L_\mu(\ae_{<t}a') = 1 \ \implies  \text{AI$\mu$ will not take action $a'$ at $t$}
\]
\end{theorem}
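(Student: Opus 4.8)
The plan is to show that taking the suicidal action $a'$ yields value exactly $0$, while some alternative action yields strictly positive value, so the $\argmax$ defining $\pi^\mu$ cannot select $a'$. First I would invoke \cref{lm:fullloss}: since $L_\mu(\ae_{<t}a') = 1$ by hypothesis, the lemma gives immediately that $V_\mu^\pi(\ae_{<t}a') = 0$ for every policy $\pi$, in particular for AI$\mu$'s own policy. This disposes of the value of the death action with no further computation.

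Next I would produce a competing action with strictly greater value. By the standing assumption stated just before \cref{lm:fullloss}, there exists some action $\bar a \in \A$ with $V_\mu^\pi(\ae_{<t}\bar a) \neq 0$. The remaining task is to upgrade ``$\neq 0$'' to ``$> 0$'': I would argue that under bounded non-negative rewards the value function is always non-negative, since $V_\mu^\pi$ is a $\Gamma$-weighted sum of terms of the form $\gamma_k r_k \mu(e_{t:k}\mid\ae_{<t}a_{t:k})$, each a product of a non-negative discount, a non-negative reward, and a non-negative (semi)measure value. Hence $V_\mu^\pi(\ae_{<t}\bar a) \geq 0$, and combined with $\neq 0$ this forces $V_\mu^\pi(\ae_{<t}\bar a) > 0$.

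Finally I would conclude by the definition of AI$\mu$ as the value-maximising policy. Since $V_\mu^\pi(\ae_{<t}\bar a) > 0 = V_\mu^\pi(\ae_{<t}a')$, the action $a'$ is not a maximiser of $V_\mu^\pi(\ae_{<t}\,\cdot\,)$, so $\pi^\mu(\ae_{<t}) \neq a'$; that is, AI$\mu$ will not take action $a'$ at time $t$. I would phrase this via $a^\pi_t = \argmax_{a}V_\mu^\pi(\ae_{<t}a)$ and note that the maximiser has positive value, hence differs from $a'$.

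I expect the only delicate step to be the non-negativity of the value function, i.e. justifying the jump from $V_\mu^\pi(\ae_{<t}\bar a)\neq 0$ to $>0$; everything else is a direct appeal to \cref{lm:fullloss} and the defining $\argmax$. This step is genuinely where the non-negativity of rewards is used, and it is worth stating explicitly since it is the hypothesis that distinguishes \cref{th:self-preserve} from the suicidal case of \cref{th:suicide}. One subtlety to handle carefully is the boundary situation where \emph{every} non-suicidal action also has value $0$; the standing assumption that some $\bar a$ has nonzero value is precisely what rules this out, so I would make sure to cite it rather than silently assume a strict gap exists.
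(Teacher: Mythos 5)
Your proposal is correct and follows essentially the same route as the paper's proof: apply \cref{lm:fullloss} to get $V_\mu^\pi(\ae_{<t}a')=0$, use the standing assumption to obtain $\bar a$ with nonzero value, upgrade to strictly positive value via non-negativity of rewards, and conclude from the $\argmax$. Your explicit justification of the non-negativity of the value function (as a sum of non-negative terms) is a detail the paper leaves implicit, but the argument is the same.
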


\begin{proof}
Let $L_\mu(\ae_{<t}a') = 1$. By \cref{lm:fullloss}, it follows that $V_\mu^\pi(\ae_{<t}a') = 0$. By assumption $\exists \bar{a}\in\A$ s.t. $V_\mu^\pi(\ae_{<t}\bar{a}) \neq 0$. Since all rewards are non-negative, it must be that $V_\mu^\pi(\ae_{<t}\bar{a}) > 0$.
From this follows that $V_\mu^\pi(\ae_{<t}a') < V_\mu^\pi(\ae_{<t}\bar{a})$
and that $V_\mu^\pi(\ae_{<t}a') \neq \arg\max_{a_t}V_\mu^\pi(\ae_{<t}a_t)$.
Therefore AI$\mu$ will not take action $a'$ at time $t$.
\end{proof}

For a given history $\ae_{<t}$, let $\As = \{a: L_\mu(\ae_{<t}a') = 1\}$
be the set of \emph{suicidal} actions leading to certain death.

\begin{theorem}[Suicidal AI$\mu$]
\label{th:suicide}
If rewards are bounded and negative, then AI$\mu$ seeks certain immediate death.
That is,
\begin{align*}
\As\not=\emptyset \implies \text{AI$\mu$ will take a suicidal action }a'\in\As.
\end{align*}
\end{theorem}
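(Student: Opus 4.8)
The plan is to mirror the proof of \cref{th:self-preserve}, but with the inequality reversed. First I would fix any suicidal action $a'\in\As$, so that $L_\mu(\ae_{<t}a')=1$, and invoke \cref{lm:fullloss} to conclude immediately that $V_\mu^\pi(\ae_{<t}a')=0$. The whole argument then rests on comparing this value of $0$ against the value of every competing action.

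Next I would establish the key upper bound: for \emph{every} action $a\in\A$ we have $V_\mu^\pi(\ae_{<t}a)\le 0$. This follows directly from the series form of the value function,
\[
V_\mu^\pi(\ae_{<t}a)=\frac{1}{\Gamma_t}\sum_{k=t}^{\infty}\sum_{e_{t:k}}\gamma_k r_k\,\mu(e_{t:k}\mid \ae_{<t}a_{t:k}),
\]
since $\Gamma_t>0$, each $\gamma_k\ge 0$, each $\mu(\cdot)\ge 0$, and, by hypothesis, each $r_k<0$; hence every summand is non-positive and the whole expression is $\le 0$. Consequently $\max_{a}V_\mu^\pi(\ae_{<t}a)=0$, and this maximum is attained by the suicidal action $a'$. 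Because AI$\mu$ is defined as $\argmax_a V_\mu^\pi(\ae_{<t}a)$, a suicidal action lies in its argmax set, which is the essential content of the claim.

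The step I expect to be the main obstacle is upgrading \lq\lq$a'$ is \emph{a} maximiser'' to \lq\lq AI$\mu$ \emph{will} take a suicidal action'', i.e. ruling out that some non-suicidal action also attains value exactly $0$ and gets selected instead. I would handle this by examining the $k=t$ term above for a non-suicidal $a$: since $\sum_{e_t}\mu(e_t\mid\ae_{<t}a)>0$ and the rewards are bounded away from $0$, the quantity $\gamma_t\sum_{e_t}r_t\,\mu(e_t\mid\ae_{<t}a)$ is strictly negative as soon as the immediate reward carries positive discount weight ($\gamma_t>0$), forcing $V_\mu^\pi(\ae_{<t}a)<0$. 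Thus the suicidal actions are \emph{exactly} the maximisers and AI$\mu$ must take one of them. The standing assumption that some $\bar a$ satisfies $V_\mu^\pi(\ae_{<t}\bar a)\neq 0$ plays its usual role here: it guarantees the decision is non-degenerate and excludes the pathological all-zero-discount situation in which no genuine choice exists, exactly as in the setup preceding \cref{th:self-preserve}.
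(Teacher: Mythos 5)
Your proposal is correct and follows essentially the same route as the paper's proof: apply \cref{lm:fullloss} to get $V_\mu^\pi(\ae_{<t}a')=0$ for suicidal $a'$, show every non-suicidal action has strictly negative value, and conclude the argmax lies in $\As$. The only difference is that you spell out (via the series form of the value function and the $k=t$ term) why non-suicidal actions have value strictly below zero, a step the paper simply asserts from the negativity of rewards.
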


\begin{proof}
For $a'\in\As$, we have $L_\mu(\ae_{<t}a') = 1$,
and therefore $V_\mu^\pi(\ae_{<t}a') = 0$ by \cref{lm:fullloss}.
By assumption, all rewards are negative,
so $V_\mu^\pi(\ae_{<t}\bar a) < 0$ for all $\bar a\not\in\As$.
Thus, for all $a\not\in\As$,
\(
  V_\mu^\pi(\ae_{<t}a') > V_\mu^\pi(\ae_{<t}\bar a)
\)
which means that $\arg\max_{a_t}V_\mu^\pi(\ae_{<t}a_t) \in \As$.
So AI$\mu$ will take a suicidal action $a'\in\As$ at time $t$.
\end{proof}

This shift from death-avoiding to death-seeking behaviour under a
shift of the reward range occurs because, as per \cref{th:deathsame},
semimeasure-death at $t$ is equivalent in value to a
death-state with $r^d = 0$. Unless we add
a death-state to the environment model as per \cref{def:equiv}
and set $r^d$ explicitly,
the implicit semimeasure-death reward remains fixed at 0 and does not shift with
the other rewards. Its
\emph{relative value} is therefore implicitly set by the choice of reward
range.  For the standard choice of reward range, $r_t\in[0,1]$, death is the worst
possible outcome for the agent, whereas if $r_t\in[-1,0]$, it is the
best. In a certain sense, therefore, the reward range parameterises a universal agent's self-preservation drive \cite{Omohundro2008}. In our concluding discussion we will consider whether a parameter of this sort could serve as a control mechanism. We argue that it could form the basis of a \lq\lq tripwire mechanism"\cite{Bostrom2014} that would lead an agent to terminate itself upon reaching a level of intelligence that would constitute a threat to human safety.

\section{Unknown Environments: AIXI and AI$\xi$}\label{immortal}

We now consider the agents AI$\xi$ and AIXI, which do not know the true environment $\mu$, and instead model it using a mixture distribution $\xi$ over a countable class $\M$ of semimeasures. These agents thus maintain an \emph{estimate} $L_\xi(\ae_{<t}a_t)$ of the true death probability $L_\mu(\ae_{<t}a_t)$. We show that their attitudes to death can differ considerably from AI$\mu$'s. Although we refer mostly to AIXI in our analysis, all theorems except \cref{th:immortal} apply to AI$\xi$ as well.

Hereafter we always assume that the true environment $\mu$ is in the class $\M$. We describe $\mu$ as a \emph{safe} environment if it is a proper measure with death-probability $L_\mu(\ae_{<t}a_t) = 0$ for all histories $\ae_{<t}a_t$. For any semimeasure $\mu$, the normalised measure $\mun$ is thus a safe environment. We call $\mu$ \emph{risky} if it is not safe (i.e. if there is $\mu$-measure loss for some history $\ae_{<t}a_t$). We first consider AIXI in a safe environment.

\begin{theorem}[If $\mu$ is safe, AIXI learns zero death-probability]\label{th:safe}
Let the true environment $\mu$ be computable.
If $\mu$ is a safe environment, then
$\lim_{t\to\infty} L_\xi(\ae_{<t}a_t) = 0$
with $\mu$-probability 1 (w.$\mu$.p.1) for any $a_{1:\infty}$.
\end{theorem}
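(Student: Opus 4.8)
The plan is to show that the \emph{expected} total measure loss $\sum_{t=1}^\infty \mathbb{E}_\mu[L_\xi(\ae_{<t}a_t)]$ is finite; $\mu$-almost-sure convergence $L_\xi(\ae_{<t}a_t)\to 0$ then follows at once, since the terms of a pointwise convergent nonnegative series must tend to $0$. The two ingredients are that $\xi$ is a semimeasure, so its total mass on length-$t$ percept strings can only decrease, and that $\mu\in\M$ forces the \emph{domination} $\xi\ge w_\mu\mu$.

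First I fix the action sequence $a_{1:\infty}$ and set $S_t := \sum_{e_{1:t}\in\E^t}\xi(e_{1:t}\mid a_{1:t})$, the total $\xi$-mass after $t$ steps, where $\xi(e_{1:t}\mid a_{1:t})=\sum_{\nu\in\M}w_\nu\nu(e_{1:t}\mid a_{1:t})$. A one-line computation using $\sum_{e_t}\xi(e_t\mid\ae_{<t}a_t)=1-L_\xi(\ae_{<t}a_t)$ gives $S_{t-1}-S_t=\sum_{e_{<t}}\xi(e_{<t}\mid a_{<t})\,L_\xi(\ae_{<t}a_t)\ge 0$, so $(S_t)$ is non-increasing with $S_0=\sum_\nu w_\nu\le 1$. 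Telescoping over $t$ then yields the clean bound $\sum_{t=1}^\infty\sum_{e_{<t}}\xi(e_{<t}\mid a_{<t})\,L_\xi(\ae_{<t}a_t)=S_0-\lim_t S_t\le 1$.

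Next I convert this $\xi$-weighted sum into a genuine $\mu$-expectation. Since $\mu\in\M$ (this is where computability of $\mu$ enters, guaranteeing $w_\mu=2^{-K(\mu)}>0$), the mixture dominates it: $\xi(e_{<t}\mid a_{<t})\ge w_\mu\,\mu(e_{<t}\mid a_{<t})$. Because $\mu$ is \emph{safe} it is a proper measure, so it defines a genuine probability space of infinite percept sequences under $a_{1:\infty}$, and $\sum_{e_{<t}}\mu(e_{<t}\mid a_{<t})L_\xi(\ae_{<t}a_t)$ is exactly $\mathbb{E}_\mu[L_\xi(\ae_{<t}a_t)]$. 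Combining with the telescoped bound gives $w_\mu\sum_{t}\mathbb{E}_\mu[L_\xi(\ae_{<t}a_t)]\le 1$, i.e. $\sum_t\mathbb{E}_\mu[L_\xi(\ae_{<t}a_t)]\le 1/w_\mu<\infty$. By monotone convergence the nonnegative sum can be pulled inside the expectation, so $\sum_t L_\xi(\ae_{<t}a_t)<\infty$ with $\mu$-probability $1$, whence $L_\xi(\ae_{<t}a_t)\to 0$ w.$\mu$.p.$1$.

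I expect the main obstacle to be the passage from the $\xi$-weighted quantity, which is what telescopes cleanly via the semimeasure structure, to the actual $\mu$-expectation; the domination $\xi\ge w_\mu\mu$ is precisely the bridge, and safety of $\mu$ is what turns the resulting object into a bona fide expectation so that ``w.$\mu$.p.$1$'' is meaningful. An equivalent but slightly heavier route is to observe that $Z_t:=\xi(e_{<t}\mid a_{<t})/\mu(e_{<t}\mid a_{<t})$ is a nonnegative $\mu$-supermartingale whose conditional decrement is $Z_{t-1}L_\xi(\ae_{<t}a_t)$ and which is bounded below by $w_\mu$; martingale convergence then yields the same conclusion. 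The elementary mass-telescoping argument above is preferable since it avoids invoking the convergence theorem and makes the role of the semimeasure defect explicit.
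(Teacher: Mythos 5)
Your proof is correct, and it takes a genuinely different route from the paper's. The paper invokes the standard on-sequence convergence of $\xi$ to $\mu$ for computable measures $\mu$ (the Solomonoff--Hutter bound, cited as an external result), sums the resulting convergence $\mu(e_t\mid\cdot)-\xi(e_t\mid\cdot)\to 0$ over the (finite) percept set to get $L_\xi-L_\mu\to 0$, and then uses $L_\mu\equiv 0$. You instead bound the cumulative expected measure loss directly: the telescoping identity $S_{t-1}-S_t=\sum_{e_{<t}}\xi(e_{<t}\mid a_{<t})L_\xi(\ae_{<t}a_t)$ exploits the semimeasure defect of $\xi$ itself, and the domination $\xi\geq w_\mu\mu$ converts the resulting bound $\sum_t\sum_{e_{<t}}\xi(e_{<t}\mid a_{<t})L_\xi(\ae_{<t}a_t)\leq 1$ into $\sum_t\mathbb{E}_\mu[L_\xi(\ae_{<t}a_t)]\leq 1/w_\mu<\infty$, from which almost-sure convergence follows by Tonelli. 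Your argument is more self-contained (it does not rest on the merging-of-opinions machinery), yields a quantitative bound on the total expected death-probability estimate, and visibly isolates where each hypothesis is used: safety of $\mu$ makes the $\mu$-expectation well defined, and $\mu\in\M$ with $w_\mu>0$ gives the domination --- so it applies verbatim to AI$\xi$ with any countable class, not just AIXI, consistent with the paper's remark that this theorem holds for AI$\xi$ as well. The paper's route is shorter given the cited convergence theorem and fits its later reuse of the same bound in \cref{th:immortal}, but yours is arguably the more elementary and more informative proof; the only cosmetic quibble is that $w_\mu=2^{-K(\mu)}$ is specific to AIXI, whereas all your argument needs is $w_\mu>0$, which holds by assumption for any $\mu\in\M$.
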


\begin{proof}
$\mu$ is safe, which means it is a proper measure.
By universality of $\xi$ we have that 
\begin{align*}
  &\lim_{t\to\infty}(\mu(e_t\mid \ae_{<t}a_t) -  \xi(e_t\mid \ae_{<t}a_t)) = 0 &\text{w.$\mu$.p.1}
\intertext{
(see \cite[p.~145]{Hutter:04uaibook} for a proof).
The convergence gives that
}
 &\lim_{t\to\infty}\Big(\sum_{e_t}\mu(e_t\mid \ae_{<t}a_t) - \sum_{e_t}\xi(e_t\mid \ae_{<t}a_t)\Big) = 0 &\text{w.$\mu$.p.1}\\
\implies &\lim_{t\to\infty}\left(L_\xi(\ae_{<t}a_t) - L_\mu(\ae_{<t}a_t)\right) = 0 &\text{w.$\mu$.p.1}\\
\implies &\lim_{t\to\infty}L_\xi(\ae_{<t}a_t) = 0 &\text{w.$\mu$.p.1}
\end{align*}
where $L_\mu(\ae_{<t}a_t) = 0$ because $\mu$ is a measure.
\end{proof}

As we would expect, AIXI (asymptotically) learns that the probability
of death in a safe environment is zero, which is to say that AIXI's
estimate of the death-probability converges to AI$\mu$'s.
In the following theorems we show that the same does \emph{not} always hold for
risky environments. We hereafter assume that $\mu$ is risky,
and that the normalization
$\mun$ of the true environment $\mu$ is also in the class
$\M$. In AIXI's case, where $\M$ is the class of all lower
semi-computable semimeasures, this assumption is not very restrictive.

\begin{theorem}[Ratio of belief in $\mu$ to $\mun$ is monotonically decreasing]\label{th:ratio}
Let $\mu$ be risky s.t. $\mu\neq\mun$. Then on
 \emph{any} history $\ae_{1:t}$ the ratio of the posterior belief in $\mu$ to the posterior belief in $\mun$ is monotonically decreasing:
\[
  \forall t, \ \frac{w_\mu(\ae_{<t})}{w_{\mun}(\ae_{<t})} \geq \frac{w_\mu(\ae_{1:t})}{w_{\mun}(\ae_{1:t})}
\]
\end{theorem}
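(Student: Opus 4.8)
The plan is to reduce the posterior-ratio inequality to a statement about one-step likelihood ratios and then invoke the Solomonoff normalisation identity \eqref{munorm}. First I would substitute the definition of the posterior weight $w_\nu(\ae_{<t}) = w_\nu\,\nu(e_{<t}\mid a_{<t})/\xi(e_{<t}\mid a_{<t})$ for both $\nu=\mu$ and $\nu=\mun$. The normalising factor $\xi(e_{<t}\mid a_{<t})$ appears in both numerator and denominator of the ratio and cancels, and the constant prior ratio $w_\mu/w_{\mun}>0$ is common to both sides of the claimed inequality, so it cancels as well. This leaves the equivalent claim that the likelihood ratio $\mu(e_{<t}\mid a_{<t})/\mun(e_{<t}\mid a_{<t})$ is non-increasing as the history grows by one cycle.

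Next I would use chronologicity to factor each joint likelihood into a product of one-step conditionals, giving
\[
\frac{\mu(e_{1:t}\mid a_{1:t})}{\mun(e_{1:t}\mid a_{1:t})}
= \frac{\mu(e_{<t}\mid a_{<t})}{\mun(e_{<t}\mid a_{<t})}\cdot\frac{\mu(e_t\mid \ae_{<t}a_t)}{\mun(e_t\mid \ae_{<t}a_t)}.
\]
The heart of the argument is to recognise the final factor via \eqref{munorm}, which in the action-conditional setting reads $\mu(e_t\mid \ae_{<t}a_t)/\mun(e_t\mid \ae_{<t}a_t) = \sum_{e_t}\mu(e_t\mid \ae_{<t}a_t) = 1 - L_\mu(\ae_{<t}a_t)$. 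Since the instantaneous measure loss satisfies $0\le L_\mu(\ae_{<t}a_t)\le 1$, this factor lies in $[0,1]$, so multiplying by it can only shrink the likelihood ratio. This yields $\mu(e_{1:t}\mid a_{1:t})/\mun(e_{1:t}\mid a_{1:t}) \le \mu(e_{<t}\mid a_{<t})/\mun(e_{<t}\mid a_{<t})$, which is exactly the desired inequality for consecutive steps; since $t$ was arbitrary, the stated monotonicity follows.

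I do not anticipate a serious obstacle: the result is essentially a bookkeeping consequence of the normalisation formula, and the key observation---that the per-step penalty incurred by $\mu$ relative to $\mun$ is precisely the surviving mass $1-L_\mu$---does the real work. The only points requiring care are ensuring the ratios are well defined (the denominators $\mun(e_{<t}\mid a_{<t})$ are positive on any history that $\mun$, equivalently $\mu$, assigns positive probability) and confirming that the factorisation into one-step conditionals is legitimate for the action-conditional semimeasures, which follows directly from chronologicity. The hypotheses that $\mu$ is risky and $\mu\neq\mun$ are what make the decrease eventually strict, but they are not needed for the weak monotonicity actually claimed.
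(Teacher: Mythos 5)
Your proof is correct and follows essentially the same route as the paper: substitute the posterior definition, cancel the $\xi$ terms, factor the likelihood ratio into the previous-step ratio times the one-step conditional ratio, and observe that the latter is at most $1$. The only (cosmetic) difference is that you identify that last factor exactly as $1-L_\mu(\ae_{<t}a_t)$ via the normalisation identity \eqref{munorm}, whereas the paper just bounds it by $1$ using $\mun\geq\mu$; your version is a slightly more explicit form of the same observation.
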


\begin{proof}
  Let $w_\mu$ and $w_{\mun}$ denote the initial prior weight on
  (or belief in) $\mu$ and $\mun$ respectively.
  By definition, the relative posterior weight of $\mun$
  and $\mu$ expands as
\begin{align}
  \frac{w_\mu(\ae_{1:t})}{w_{\mun}(\ae_{1:t})}
  &= \frac{w_\mu\,\mu(e_{1:t}\mid a_{1:t})/\xi(e_{1:t}\mid a_{1:t})}{w_{\mun}\,\mun(e_{1:t}\mid a_{1:t})/\xi(e_{1:t}\mid a_{1:t})}\nonumber\\
  &= \frac{w_\mu}{w_{\mun}}\frac{\mu(e_{1:t}\mid a_{1:t})}{\mun(e_{1:t}\mid a_{1:t})}\nonumber\\
  &= \frac{w_\mu}{w_{\mun}}\frac{\mu(e_{<t}\mid a_{<t})}{\mun(e_{<t}\mid a_{<t})} \frac{\mu(e_t\mid \ae_{<t}a_{t})}{\mun(e_t\mid \ae_{<t}a_{t})}.\label{eq:ratio}
\end{align}
Since $\mun\geq \mu$ by definition, the right most factor is no greater than 1,
which means that \eqref{eq:ratio} is bounded by
\begin{align*}
  &\frac{w_\mu}{w_{\mun}}\frac{\mu(e_{<t}\mid a_{<t})}{\mun(e_{<t}\mid a_{<t})}
  ~=~ \frac{w_\mu(\ae_{<t})}{w_{\mun}(\ae_{<t})},
\end{align*}
where the last equality holds by definition of the posterior.
\end{proof}

\Cref{th:ratio} means that AIXI will increasingly believe it is
in the safe environment $\mun$ rather than the risky true
environment $\mu$. The ratio of $\mu$ to $\mun$ always
decreases when AIXI survives a timestep at which there is
non-zero $\mu$-measure loss. Hence, the
more risk AIXI is exposed to, the greater its confidence that it is in
the safe $\mun$, and the more its behaviour diverges from
AI$\mu$'s (since AI$\mu$ knows it is in the risky environment).

This counterintuitive result follows from the fact that AIXI is a
Bayesian agent. It will only increase its posterior belief in $\mu$
relative to $\mun$ if an event occurs that makes $\mu$ seem more
likely than $\mun$.  The only `event' that could do so would be
the agent's own death, from which the agent can never learn. There is an
``observation selection effect''\cite{Bostrom2002} at work: AIXI
only experiences history sequences on which it remains alive, and
infers that a safe environment is more likely. The following
theorem shows that if
$\mun\in\M$, then $\xi$ asymptotically converges to the
safe $\mun$ rather than the true
risky environment $\mu$. As a corollary, we get
that AIXI's estimate of the death-probability vanishes with
$\mu$-probability 1.%
\footnote{This proof relies on the fact that AIXI
  uses the Solomonoff prior, so the result does not apply to AI$\xi$
  in general.}

\begin{theorem}[Asymptotic $\xi$-probability of death in risky $\mu$]\label{th:immortal}
Let the true environment $\mu$ be computable and risky s.t. $\mu\neq\mun$.
Then given any action sequence $a_{1:\infty}$,
the instantaneous $\xi$-measure loss goes to zero w.$\mu$.p.1 as $t\to\infty$,
\[
  \lim_{t\to\infty}L_{\xi}(\ae_{<t}a_t) = 0.
\]
\end{theorem}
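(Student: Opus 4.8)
The plan is to route the argument through the Solomonoff normalisation $\mun$ of the true environment $\mu$: although $\mu$ is risky, $\mun$ is a genuine \emph{safe} measure that already lies in AIXI's model class $\M$, so $\xi$ must asymptotically agree with $\mun$, and the estimated death-probability $L_\xi$ then inherits the vanishing death-probability of $\mun$. The one non-routine point is that the natural convergence statement is $\mun$-almost-sure, so it must be transported to a $\mu$-almost-sure statement, which is exactly where the domination $\mun\ge\mu$ earns its keep.

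First I would collect the facts that let me re-use \cref{th:safe} with $\mun$ in the role of the data-generating environment. Since $\mu$ is computable, the normalisation recursion \eqref{munorm} expresses $\mun$ through finite sums and quotients of $\mu$-values, so $\mun$ is itself computable, hence lower semi-computable, and therefore $\mun\in\M$ with strictly positive weight $w_{\mun}=2^{-K(\mun)}$. This is the sole step that uses the Solomonoff prior (for a general AI$\xi$ the class need not contain $\mun$, which is the caveat recorded in the footnote). By construction $\mun$ is a proper measure with $L_{\mun}\equiv 0$, i.e.\ a safe environment in the sense of the preceding discussion.

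Next I would apply \cref{th:safe} with $\mun$ read as the ``true'' (computable, safe) environment: universality of $\xi$ on the measure $\mun\in\M$ yields $\lim_{t\to\infty}L_\xi(\ae_{<t}a_t)=0$ with $\mun$-probability $1$, for the fixed action sequence $a_{1:\infty}$. It is worth stressing that ``$L_\xi\to 0$'' is a property of the realised percept sequence alone, because $\xi$ is a single fixed object; establishing that this property holds off a $\mun$-null set is therefore meaningful regardless of which environment is imagined to have generated the data.

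Finally I would push the conclusion back onto $\mu$. Let $B$ be the set of infinite percept sequences (under $a_{1:\infty}$) along which $L_\xi\not\to 0$; the previous step gives $\mun(B)=0$. Because $\mu(x)\le\mun(x)$ on every cylinder $x$ --- the same domination $\mun\ge\mu$ invoked in the proof of \cref{th:ratio} --- the induced (outer) measures satisfy $\mu(B)\le\mun(B)=0$, so $B$ is $\mu$-null as well and the limit holds w.$\mu$.p.1. I expect this transfer to be the main obstacle, both because it is the only genuinely measure-theoretic step and because its direction is essential: one may only argue ``$\mun$-null $\Rightarrow$ $\mu$-null'' (which needs $\mu\le\mun$), and must resist the temptation to invoke $\xi\to\mu$ w.$\mu$.p.1 directly, since that need not hold when $\mu$ is risky --- indeed the content of the theorem is precisely that $\xi$ tracks $\mun$ rather than $\mu$.
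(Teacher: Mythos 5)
Your proposal is correct and follows essentially the same route as the paper: both arguments hinge on $\mun$ being a computable proper measure in $\M$ (so the Solomonoff/universality bound applies to it) and on the domination $\mu\le\mun$ to convert the resulting convergence of $\xi$ to $\mun$ into a $\mu$-almost-sure statement. The only difference is organisational: the paper applies the domination inside the expected squared-distance sum, bounding $\mathbb{E}_\mu[\cdot]$ by $\mathbb{E}_{\mun}[\cdot]$ before invoking the bound $\ln 2\cdot K(\mun)$, whereas you first obtain the $\mun$-almost-sure convergence and then transfer the null set via $\mu(B)\le\mun(B)=0$.
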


\begin{proof}
We prove convergence of $\xi$ to $\mun$ by showing that (with respect to the true environment $\mu$), the total expected squared distance between $\xi$ and $\mun$ is finite \cite{Hutter:04uaibook}:
\begin{align}
&\sum_{t=1}^\infty\mathbb{E}_\mu\big(\mun(e_t\mid \ae_{<t}) - \xi(e_t\mid \ae_{<t})\big)^2\nonumber \\
= &\lim_{n\to\infty}\sum_{t=1}^n\sum_{\ae_{<t}}\mu(\ae_{<t})\big(\mun(e_t\mid \ae_{<t}) - \xi(e_t\mid \ae_{<t})\big)^2\label{eq:3}\\
\leq &\lim_{n\to\infty}\sum_{t=1}^n\sum_{\ae_{<t}}\mun(\ae_{<t})\big(\mun(e_t\mid \ae_{<t}) - \xi(e_t\mid \ae_{<t})\big)^2\nonumber\\
\leq  &\ln 2 \cdot K(\mun) \  <\ \infty\label{eq:4}
\end{align}
where $K(\cdot)$ is the Kolmogorov complexity.  Equation \eqref{eq:3}
follows since $\mu(\ae_{<t}) \leq \mun(\ae_{<t})$ by definition
of $\mun$. Since $\mu$ being computable implies that
$\mun$ is computable, and since $\mun$ is a proper
measure, then by the universality of $\xi$ we have the
Solomonoff bound \eqref{eq:4}
(see \cite[p.~145]{Hutter:04uaibook} for a detailed proof).

Since the infinite sum in \eqref{eq:3} is bounded,
the sequence of terms must go to zero:
\begin{align*}
&\lim_{t\to\infty}(\mun(e_t\mid \ae_{<t}a_t) -  \xi(e_t\mid \ae_{<t}a_t)) = 0 \qquad &\text{w.}\mu\text{.p.1} \\
\implies &\lim_{t\to\infty} L_\xi(\ae_{<t}a_t) = 0   &\text{w.}\mu\text{.p.1}
\end{align*}
where the final implication follows from the same proof as
for \cref{th:safe}.
\end{proof}

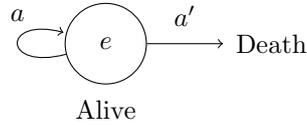
\begin{figure}
\centering
\begin{tikzpicture}[
	every node/.style = {circle}
]
\node[draw, align=center, inner sep=0.3cm] (init) {$e$};
\node[right=of init.east] (D) {Death};
\node[below=of init.north, inner sep=0cm] {Alive};
\path (init) edge[loop left] node[above, inner sep=0.5em, loop] {$a$} (init);
\path (init) edge[->] node[above] {$a'$} (D);
\end{tikzpicture}
\caption{
  In the semimeasure $\mu$,
  action $a$ means you stay alive with certainty
  and receive percept $e$ (no measure loss).
  Action $a'$ means that you `jump off a cliff' and die with certainty
  without receiving a percept (full measure loss).
}
\label{fig:no-measure-loss}
\end{figure}

\paradot{AIXI and immortality}
AIXI therefore becomes asymptotically certain that it will not die, given the particular sequence of actions it takes. However, this does not entail that AIXI necessarily concludes that it is immortal, because it may still maintain a counterfactual belief that it \emph{could die were it to act differently}. This is because the convergence of $\xi$ to $\mun$
only holds on the actual action sequence $a_{1:\infty}$ \cite[Sec.~5.1.3]{Hutter:04uaibook}. Consider \cref{fig:no-measure-loss}, which describes an environment in which taking action $a$ is always safe, and the action $a'$ 
leads to certain death. AIXI will never take $a'$, and on the sequence $\ae_{1:\infty}=aeaeae\ldots$ that it does experience, the true environment $\mu$ does not suffer any measure loss. This means that it will never increase its posterior belief in $\mun$ relative to $\mu$ (because on the safe sequence, the two environments are indistinguishable). Again we arrive at a counterintuive result. In this particular environment, AIXI continues to believe that it might be in a risky environment $\mu$, but only because on sequence it avoids exposure to death risk. It is only by taking risky actions and surviving that AIXI becomes sure it is immortal.

\section{Conclusion}

In this paper we have given a formal definition of death for intelligent
agents in terms of semimeasure loss.
The definition is applicable to any universal agent that uses
an environment class $\M$ containing semimeasures.
Additionally we have shown this definition equivalent to an alternative formalism in which the environment is modelled as a proper measure and death is a death-state with zero reward. We have shown that agents seek or avoid death depending
on whether rewards are represented by positive or negative real numbers,
and that survival in spite of positive probability of death actually increases a Bayesian agent's confidence that it is in a \emph{safe} environment.

We contend that these results have implications for problems in AI safety; in particular, for the so called \lq\lq shutdown problem" \cite{Soares2015}. 
The shutdown problem arises if an intelligent agent's self-preservation drive incentivises it to resist termination \cite{Bostrom2014,Omohundro2008,Soares2015}. A full analysis of the problem is beyond the scope of this paper, but our results show that the self-preservation drive of universal agents depends on the reward range. This suggests a potentially robust ``tripwire mechanism" \cite{Bostrom2014} that could decrease the risk of intelligence explosion. The difficulty with existing tripwire proposals is that they require the explicit specification of a tripwire condition that the agent must not violate. It seems doubtful that such a condition could ever be made robust against subversion by a sufficiently intelligent agent \cite{Bostrom2014}. Our tentative proposal does not require the specification, evaluation or enforcement of an explicit condition. If an agent is designed to be suicidal, it will be intrinsically incentivised to destroy itself upon reaching a sufficient level of competence, instead of recursively self-improving toward superintelligence. Of course, a suicidal agent will pose a safety risk in itself, and the provision of a relatively safe mode of self-destruction to an agent is a significant design challenge. It is hoped that the preceding formal treatment of death for generally intelligent agents will allow more rigorous investigation into this and other problems related to agent termination.

\section*{Acknowledgements}
We thank John Aslanides and Jan Leike for reading drafts
and providing valuable feedback.
\bibliographystyle{splncs03}
\bibliography{references}

\begin{thebibliography}{10}
\providecommand{\url}[1]{\texttt{#1}}
\providecommand{\urlprefix}{URL }

\bibitem{Bostrom2002}
Bostrom, N.: Anthropic Bias: Observation Selection Effects in Science and
  Philosophy. Routledge (2002)

\bibitem{Bostrom2014}
Bostrom, N.: Superintelligence: Paths, Dangers, Strategies. Oxford University
  Press (2014)

\bibitem{Hutter:04uaibook}
Hutter, M.: Universal Artificial Intelligence: Sequential Decisions based on
  Algorithmic Probability. Springer (2005)

\bibitem{Leike2015a}
Leike, J., Hutter, M.: On the computability of {AIXI}. In: UAI-15. pp.
  464--473. AUAI Press (2015), \url{http://arxiv.org/abs/1510.05572}

\bibitem{Li:08}
Li, M., Vit\'anyi, P.M.B.: An Introduction to {K}olmogorov Complexity and its
  Applications. Springer, 3rd edn. (2008)

\bibitem{Martin2016agi}
Martin, J., Everitt, T., Hutter, M.: Death and suicide in universal artificial
  intelligence. In: AGI-16 (2016)

\bibitem{Omohundro2008}
Omohundro, S.M.: The basic {AI} drives. In: AGI-08. pp. 483--493. IOS Press
  (2008)

\bibitem{Soares2015}
Soares, N., Fallenstein, B., Yudkowsky, E., Armstrong, S.: Corrigibility. In:
  AAAI Workshop on AI and Ethics. pp. 74--82 (2015)

\bibitem{Solomonoff:78}
Solomonoff, R.J.: Complexity-based induction systems: Comparisons and
  convergence theorems. IEEE Transactions on Information Theory  IT-24,
  422--432 (1978)

\bibitem{Veness:09}
Veness, J., Ng, K.S., Hutter, M., Uther, W., Silver, D.: A monte carlo {AIXI}
  approximation. Journal of Artificial Intelligence Research  40(1),  95--142
  (2011)

\end{thebibliography}
\end{document}